\documentclass[final]{hld2026} 
\usepackage{hyperref}
\usepackage{amsmath}
\usepackage{amssymb}
\usepackage{bm}
\usepackage{cancel}
\usepackage{mathtools}
\usepackage[toc,page]{appendix}
\usepackage{listings}
\usepackage{subcaption}
\usepackage{tcolorbox}
\usepackage[table,svgnames]{xcolor}
\usepackage{multirow}
\usepackage{hhline}
\usepackage{minitoc}
\usepackage{enumitem}
\usepackage{wrapfig}
\renewcommand \thepart{}
\renewcommand \partname{}

\DeclareCaptionFont{figfont}{\fontsize{10pt}{12pt}\selectfont}
\definecolor{darkred}{rgb}{0.55, 0.0, 0.0}
\definecolor{darkblue}{rgb}{0.0, 0.0, 0.55}

\definecolor{codegreen}{rgb}{0,0.6,0}
\definecolor{codegray}{rgb}{0.5,0.5,0.5}
\definecolor{codepurple}{rgb}{0.58,0,0.82}
\definecolor{backcolour}{rgb}{0.95,0.95,0.92}
\lstdefinestyle{mystyle}{
    backgroundcolor=\color{backcolour}, 
    commentstyle=\color{codegreen},
    keywordstyle=\color{magenta},
    numberstyle=\tiny\color{codegray},
    stringstyle=\color{codepurple},
    basicstyle=\ttfamily\scriptsize,
    breakatwhitespace=false, 
    breaklines=true,
    captionpos=b, 
    keepspaces=true,
    showspaces=false, 
    showstringspaces=false,
    showtabs=false, 
    tabsize=4
}
\renewcommand{\appendixname}{Appendix}

\usepackage{etoolbox}
\makeatletter
\patchcmd{\hyper@makecurrent}{%
    \ifx\Hy@param\Hy@chapterstring
        \let\Hy@param\Hy@chapapp
    \fi
}{%
    \iftoggle{inappendix}{
        \@checkappendixparam{chapter}%
        \@checkappendixparam{section}%
        \@checkappendixparam{subsection}%
        \@checkappendixparam{subsubsection}%
        \@checkappendixparam{paragraph}%
        \@checkappendixparam{subparagraph}%
    }{}%
}{}{\errmessage{failed to patch}}

\newcommand*{\@checkappendixparam}[1]{%
    \def\@checkappendixparamtmp{#1}%
    \ifx\Hy@param\@checkappendixparamtmp
        \let\Hy@param\Hy@appendixstring
    \fi
}
\makeatletter

\newtoggle{inappendix}
\togglefalse{inappendix}

\apptocmd{\appendix}{\toggletrue{inappendix}}{}{\errmessage{failed to patch}}

\usepackage{environ}
\usepackage{xstring}
\NewEnviron{salign}{%
    \expandarg\IfSubStr{\detokenize\expandafter{\BODY}}{\detokenize{\\}}{%
        \begin{subequations}
        \begin{align}
            \BODY
        \end{align}
        \end{subequations}
    }{%
        \begin{align}
            \BODY
        \end{align}
    }%
}

\usepackage[nameinlink]{cleveref}
\crefname{section}{Section}{Sections}
\crefname{equation}{Equation}{Equations}
\creflabelformat{equation}{#2#1#3}
\crefname{figure}{Figure}{Figures}
\crefname{lemma}{Lemma}{Lemmas}
\crefname{corollary}{Corollary}{Corollaries}
\crefname{theorem}{Theorem}{Theorems}
\crefname{assumption}{Assumption}{Assumptions}

\newcommand{\eg}{e.g.\;}

\newcommand{\ie}{i.e.\;}

\newcommand{\wrt}{w.r.t.\;}

\newcommand{\bff}{\mathbf{f}}

\newcommand{\bq}{\mathbf{q}}

\newcommand{\bu}{\mathbf{u}}
\newcommand{\bv}{\mathbf{v}}

\newcommand{\bx}{\mathbf{x}}
\newcommand{\by}{\mathbf{y}}

\newcommand{\bJ}{\mathbf{J}}

\newcommand{\bP}{\mathbf{P}}

\newcommand{\bS}{\mathbf{S}}

\newcommand{\bU}{\mathbf{U}}
\newcommand{\bV}{\mathbf{V}}
\newcommand{\bW}{\mathbf{W}}
\newcommand{\bX}{\mathbf{X}}
\newcommand{\bY}{\mathbf{Y}}

\newcommand{\cL}{\mathcal{L}}

\newcommand{\cN}{\mathcal{N}}

\newcommand{\cU}{\mathcal{U}}

\newcommand{\lr}[3]{\left#1#2\right#3}
\newcommand{\rb}[1]{\lr{(}{#1}{)}}
\renewcommand{\sb}[1]{\lr{[}{#1}{]}}
\newcommand{\cb}[1]{\lr{\{}{#1}{\}}}

\newcommand{\norm}[1]{\lr{\|}{#1}{\|}}

\newcommand{\real}{\mathbb{R}}

\newcommand{\identity}{\mathbf{I}}

\newcommand{\trace}{\text{Tr}}

\newcommand{\loss}{\cL}

\newcommand{\transpose}{^\intercal}
\newcommand{\inv}{^{-1}}
\newcommand{\pinv}{^{\dagger}}

\newcommand{\bSigma}{\mathbf{\Sigma}}
\newcommand{\bTheta}{\mathbf{\Theta}}
\usepackage{parskip}
\newcommand{\reconsider}[1]{} 

\title{How the Hessian-Spectrum of Neural Networks Depends on Data}
\hldauthor{
    \Name{Jasraj Singh}\thanks{Work done during an internship at the ELLIS Institute, T\"{u}bingen, and MPI-IS.} \Email{jasraj.singh00150@gmail.com}
    \AND
    \Name{Enea Monzio Compagnoni} \Email{eneamonziocompagnoni@gmail.com} \\
    \addr{Universit\"{a}t Basel, Basel, Switzerland}
    \AND
    \Name{Antonio Orvieto} \Email{antonio@tue.ellis.eu} \\
    \addr{ELLIS Institute, T\"{u}bingen, Germany \\
    MPI-IS, T\"{u}bingen, Germany}
}
\date{\today}

\begin{document}
\maketitle

\begin{abstract}
    The Hessian matrix is an important quantity of interest when it comes to studying the loss landscape and optimization dynamics in deep learning, as well as designing measures of generalization, second-order learning algorithms, etc. Prior works have focused on empirical results or pursued a theoretical treatment under overly simplified settings. In this work, we derive the eigenvalues of the Hessian of linear networks with arbitrary widths and depths, and datasets with an arbitrary number of samples, features, and labels. Importantly, for classification tasks with MSE loss, we identify that the sharpness of the solution is directly related to the maximum proportion of samples belonging to any class. We empirically validate our predictions and systematically analyze the effects of shedding the impractical assumptions one at a time, as well as incorporating nonlinearities. We observe that our predictions are considerably robust in most cases, allowing us to extend our conclusions to more practical learning setups.
\end{abstract}


\section{Introduction}

Neural networks induce highly nonconvex loss landscapes that are complicated to study. Despite this complexity, the Hessian matrix of the loss, encoding only its second-order information, has played a central role in designing efficient optimization algorithms \cite{martens10hessian-free,gupta18shampoo}, deriving measures of generalization \cite{chaudhari2017entropysgd,jastrzebski2018factorsinfluencingminimasgd} -- which, in turn, inspired the design of well-generalizing optimizers \cite{foret2021sharpnessaware,kwon21basam} -- as well as practical algorithms for pruning \cite{lecun89brain,hassibi93brain}, quantization \cite{frantar22brain}, continual learning \cite{mirzadeh20regimes,mirzadeh2021linear}, etc.

Our motivation resembles that of \cite{singh2024closed,singh2026cracking}, in that we aim to precisely characterize the full spectrum of the Hessian. However, while these works prioritize minimal assumptions at the cost of model complexity, 
we leverage standard assumptions in deep learning theory in exchange for networks with \emph{arbitrary widths} and \emph{depths}, and datasets with \emph{arbitrary number of samples}, \emph{features} and \emph{labels}. 
Our methodology falls closest to \cite{ghosh25matrixfat}, which studies the spectrum of the Hessian and training dynamics at the edge-of-stability (EoS) \cite{cohen2021gradient} for matrix factorization problems. In contrast, we consider the problem of supervised learning, with the overarching goal of understanding the influence of data-geometry on the sharpness of the learnt solution. 

\section{Setup}

Consider the inputs $\bx_1, \ldots, \bx_n \in \real^{1\times d_0}$ collected in $\bX \in \real^{n\times d_0}$, and outputs $\by_1, \ldots, \by_n \in \real^{1\times d_L}$ collected in $\bY \in \real^{n\times d_L}$. Our model is an $L$-layer linear neural network, defined as
\begin{salign}
    \bff\rb{\bx; \bW_1,\bW_2,\ldots,\bW_L} &= \bx \bW_{1}\transpose \bW_{2}\transpose \ldots \bW_{L}\transpose
\end{salign}
with $\bW_{\ell} \in \real^{d_{\ell} \times d_{\ell-1}}$ is the learnable weight matrix in the $\ell^{\textsuperscript{th}}$ layer, $\forall \ell \in \sb{L} \coloneqq \{1, 2, \ldots, L\}$.\footnote{The parameterization follows the implementation of a linear layer in the PyTorch framework \citep{torch2019adam}.} In what follows, we denote the model parameters by $\bW_{1:L} \coloneqq \rb{\bW_1, \bW_2, \ldots, \bW_L}$. The loss function is the Mean-Squared Error (MSE):
\begin{salign}
    \loss\rb{\bW_{1:L}; \bX,\bY} \coloneqq \frac{1}{2n} \norm{\bff\rb{\bX; \bW_{1:L}} - \bY}_F^2
\end{salign}
where $\norm{\cdot}_F$ denotes the Frobenius norm of the argument.


\subsection{Approximating the Hessian-spectrum}

We use the generalized Gauss-Newton (GGN) approximation of the Hessian, which 
gets more and more precise as loss reduces with training:
\begin{salign}
    \frac{\partial^2}{\partial \bW_{1:L}^2} \loss\rb{\bW_{1:L}; \bX,\bY} \approx \frac{1}{n} \rb{\frac{\partial \bff\rb{\bX}}{\partial \bW_{1:L}}}\transpose \rb{\frac{\partial\bff\rb{\bX}}{\partial \bW_{1:L}}} = \frac{1}{n} \bJ\transpose \bJ \in \real^{p\times p}
\end{salign}
with the total number of parameters given by $p = \sum_{\ell=1}^L d_{\ell} d_{\ell-1}$, and the Jacobian \wrt predictions given by $\bJ \coloneqq \partial \bff\rb{\bX} / \partial \bW_{1:L} = \begin{bmatrix} \bJ_1 & \bJ_2 & \ldots & \bJ_{L} \end{bmatrix} \in \real^{d_Ln\times p}$, where
\begin{salign}
    \bJ_{\ell} &= \rb{\underbrace{\bW_{\ell+1}\transpose \bW_{\ell+2}\transpose \ldots \bW_{L}\transpose}_{\bS_{\ell} \in \real^{d_{\ell}\times d_{L}}}}\transpose
    \otimes
    \rb{\bX \underbrace{\bW_{1}\transpose \bW_{2}\transpose \ldots \bW_{\ell-1}\transpose}_{\bP_{\ell} \in \real^{d_{0}\times d_{\ell-1}}}}
\end{salign}
where $\otimes$ is the Kronecker outer-product. The GGN matrix 
shares its non-zero eigenvalues with the 
Neural Tangent Kernel (NTK) matrix \cite{jacot2018ntk}, 
which is simpler to study:
\begin{salign}
    \frac{1}{n} \bJ \bJ\transpose
    &= \sum_{\ell=1}^{L}
    \rb{\bS_{\ell}\transpose \bS_{\ell}} \otimes \rb{\frac{1}{n} \bX \bP_{\ell} \bP_{\ell}\transpose \bX\transpose} \in \real^{d_Ln\times d_Ln}
\label{eqn:column-gramian}
\end{salign}

\reconsider{The eigenvalues of each summand are given by the pairwise-products:
\begin{salign}
    \lambda_{ij} \rb{\rb{\frac{1}{n} \bX\transpose \bP_{\ell}\transpose \bP_{\ell} \bX} \otimes \rb{\bS_{\ell} \bS_{\ell}\transpose}} = \sigma^2_{i}\rb{\frac{1}{\sqrt{n}} \bP_{\ell} \bX} \sigma^2_j\rb{\bS_{\ell}}
\end{salign}
where $\sigma_{i}\rb{\cdot}$ denotes the $i\textsuperscript{th}$ largest non-zero singular value of the argument.}

Going forward, we will make use of the Singular Value Decomposition (SVD) of the feature matrix, $\bX = \bU_{\bX}\bSigma\bV_{\bX}\transpose$, where $\bU_{\bX}\in\real^{n\times r}$ is a semi-orthogonal matrix, $\bSigma\in\real^{r\times r}$ is a diagonal matrix with $\bSigma_{11} \geq \bSigma_{22} \geq \ldots \geq \bSigma_{rr} > 0$, and $\bV_{\bX}\in\real^{r\times r}$ is an orthogonal matrix; $r$ is the rank of $\bX$.

\begin{assumption}
\label{ass:whitened-inputs}
    We have an overdetermined system, \ie $n\geq d_0$, the feature matrix is full-rank, \ie $r = d_0$, and the features are isotropic, \ie $\bSigma = \bSigma_{11}\identity_{d_0}$.
\end{assumption}
This is a rather strong assumption, and not realistic in practice, but it helps draw insights from the bounds we derive on the eigenvalues of the NTK.


\reconsider{Under \cref{ass:whitened-inputs}, the non-zero singular values of $\frac{1}{\sqrt{n}} \bP_{\ell} \bX$ are proportional to those of $\bP_{\ell}$:
\begin{salign}
    \sigma_{i}\rb{\frac{1}{\sqrt{n}} \bP_{\ell} \bX} = \sigma_{\bx} \cdot \sigma_{i}\rb{\bP_{\ell}}
\end{salign}}

\subsection{Shallow Networks}

\begin{theorem}
\label{thm:shallow-networks}
    The non-zero eigenvalues of shallow linear networks ($L=2$) are characterized by the pairwise-sums of squared singular values of the two layers:
    \begin{salign}
        \lambda_{ij}\rb{\frac{1}{n}\bJ\bJ\transpose} &\leq \frac{\norm{\bX}_2^2}{n} \rb{\sigma_{i}^2\rb{\bW_1} + \sigma_{j}^2\rb{\bW_2}}
    \label{eqn:shallow-nets-eigvals}
    \end{salign}
    where $i\in\sb{\min\cb{d_0,d_1}}$ and $j\in\sb{\min\cb{d_1,d_2}}$, and $\sigma_i\rb{\bW}$ denotes the $i\textsuperscript{th}$ largest singular value of $\bW$. Furthermore, the bound is attained under \cref{ass:whitened-inputs}. 
\end{theorem}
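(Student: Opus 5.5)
The plan is to write the NTK in \cref{eqn:column-gramian} for $L=2$ explicitly as a congruence of a Kronecker sum, and then compare eigenvalues by a Loewner-order argument. For $L=2$ we have $\bS_1=\bW_2\transpose$, $\bP_1=\identity_{d_0}$, $\bS_2=\identity_{d_2}$ and $\bP_2=\bW_1\transpose$. \Cref{eqn:column-gramian} therefore reads
\begin{salign}
    \frac{1}{n}\bJ\bJ\transpose = \rb{\bW_2\bW_2\transpose}\otimes \frac{1}{n}\bX\bX\transpose + \identity_{d_2}\otimes \frac{1}{n}\bX\bW_1\transpose\bW_1\bX\transpose .
\end{salign}
Inserting the SVD $\bX=\bU_{\bX}\bSigma\bV_{\bX}\transpose$ and setting $\mathbf{A}\coloneqq \identity_{d_2}\otimes \bU_{\bX}\bSigma$, both summands factor through $\mathbf{A}$. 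This gives $\frac{1}{n}\bJ\bJ\transpose=\frac{1}{n}\mathbf{A}\mathbf{M}\mathbf{A}\transpose$ with
\begin{salign}
    \mathbf{M} \coloneqq \rb{\bW_2\bW_2\transpose}\otimes\identity_{r} + \identity_{d_2}\otimes \rb{\bV_{\bX}\transpose\bW_1\transpose\bW_1\bV_{\bX}} .
\end{salign}

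\textbf{Step 1 (spectrum of $\mathbf{M}$).} The matrix $\mathbf{M}$ is a Kronecker sum of two symmetric PSD matrices. Its eigenvectors are tensor products of the eigenvectors of the two factors, so its eigenvalues are exactly the pairwise sums of the factors' eigenvalues. The eigenvalues of $\bW_2\bW_2\transpose$ are the $\sigma_j^2\rb{\bW_2}$, padded with zeros. Since $\bV_{\bX}$ is orthogonal (here $r=d_0$ when $\bX$ has full column rank, and in general we restrict to the range), the eigenvalues of $\bV_{\bX}\transpose\bW_1\transpose\bW_1\bV_{\bX}$ are the $\sigma_i^2\rb{\bW_1}$, again padded with zeros. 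Hence the eigenvalues of $\mathbf{M}$ are $\sigma_i^2\rb{\bW_1}+\sigma_j^2\rb{\bW_2}$.

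\textbf{Step 2 (upper bound).} The nonzero eigenvalues of $\mathbf{A}\mathbf{M}\mathbf{A}\transpose$ coincide with those of $\mathbf{M}^{1/2}\mathbf{A}\transpose\mathbf{A}\mathbf{M}^{1/2}$. Moreover $\mathbf{A}\transpose\mathbf{A}=\identity_{d_2}\otimes\bSigma^2\preceq\norm{\bX}_2^2\,\identity$, so
\begin{salign}
    \mathbf{M}^{1/2}\mathbf{A}\transpose\mathbf{A}\mathbf{M}^{1/2}\preceq\norm{\bX}_2^2\,\mathbf{M} .
\end{salign}
By Weyl's monotonicity theorem, the $k$-th largest eigenvalue of $\frac{1}{n}\bJ\bJ\transpose$ is at most $\frac{\norm{\bX}_2^2}{n}$ times the $k$-th largest eigenvalue of $\mathbf{M}$. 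This is \cref{eqn:shallow-nets-eigvals}, once $\lambda_{ij}$ is read as the eigenvalue whose rank equals the rank of the pair-sum $\sigma_i^2\rb{\bW_1}+\sigma_j^2\rb{\bW_2}$ among all pair-sums.

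\textbf{Step 3 (attainment).} Under \cref{ass:whitened-inputs}, $\bSigma=\bSigma_{11}\identity_{d_0}$, so $\mathbf{A}\transpose\mathbf{A}=\norm{\bX}_2^2\identity$ and the Loewner inequality becomes an equality. More directly, $\mathbf{A}=\norm{\bX}_2(\identity\otimes\bU_{\bX})$ has orthonormal columns up to this scale, so $\frac{1}{n}\bJ\bJ\transpose$ is an isometric conjugate of $\frac{\norm{\bX}_2^2}{n}\mathbf{M}$. Its nonzero eigenvalues are therefore exactly the nonzero pair-sums, with $i\in\sb{\min\cb{d_0,d_1}}$ and $j\in\sb{\min\cb{d_1,d_2}}$. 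The extra zero-padded pair-sums, of the form $\sigma_i^2(\bW_1)+0$ or $0+\sigma_j^2(\bW_2)$, also appear and are consistent with the bound.

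\textbf{Main obstacle.} The analysis itself is easy. The hard part is making precise what $\lambda_{ij}$ means, since there is no canonical matching between eigenvalues and index pairs in the general (non-whitened) case. The resolution is to sort all pair-sums, including the zero-padded ones, and compare eigenvalues rank by rank. Step 2 delivers exactly this comparison, and it is the form I would state.
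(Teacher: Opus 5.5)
Your proof is correct, and it takes a different route from the paper's. The paper bounds the two summands separately, $\lambda_i\rb{\bX\bX\transpose}\le\bSigma_{11}^2$ and $\lambda_i\rb{\bX\bW_1\transpose\bW_1\bX\transpose}\le\bSigma_{11}^2\sigma_i^2\rb{\bW_1}$, and then adds these bounds. It gets equality under \cref{ass:whitened-inputs} by exhibiting common eigenvectors $\bu\otimes\rb{\bX\bv}$ of both summands. That addition step is only literally valid for the largest eigenvalue, via the triangle inequality for the spectral norm. For non-commuting summands there is no canonical pairing of eigenvalues with index pairs $(i,j)$.

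Your route instead factors $\frac{1}{n}\bJ\bJ\transpose=\frac{1}{n}\mathbf{A}\mathbf{M}\mathbf{A}\transpose$ with $\mathbf{M}$ an exact Kronecker sum. Combined with $\mathbf{A}\transpose\mathbf{A}\preceq\norm{\bX}_2^2\identity$ and Weyl monotonicity, this gives a precise rank-by-rank inequality, which is a rigorous meaning for $\lambda_{ij}$. Your isometric-conjugation step for equality is the same mechanism as the paper's shared eigenvectors, since $\mathbf{A}\rb{\bu\otimes\bV_{\bX}\transpose\bv}=\bu\otimes\bX\bv$, just packaged more cleanly. What the paper's route gives cheaply is explicit eigenvectors.

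Your remark on the zero-padded sums is also more accurate than the paper. Under whitening, $\sigma_i^2\rb{\bW_1}+0$ and $0+\sigma_j^2\rb{\bW_2}$ are genuinely nonzero eigenvalues; for $d_0=d_2=3$, $d_1=1$ the NTK has rank $5$, not $1$. So the paper's claim that all nonzero eigenvalues have $i\in\sb{\min\cb{d_0,d_1}}$ and $j\in\sb{\min\cb{d_1,d_2}}$ undercounts, and so does its count of at most $\min\cb{d_0,d_1}\times\min\cb{d_1,d_2}$.

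One small repair is needed. When $r<d_0$, the eigenvalues of $\bV_{\bX}\transpose\bW_1\transpose\bW_1\bV_{\bX}$ are not the $\sigma_i^2\rb{\bW_1}$, so ``restrict to the range'' needs Cauchy interlacing to show they are dominated rank by rank. It is simpler to avoid the SVD here. Take $\mathbf{A}=\identity_{d_2}\otimes\bX$ and $\mathbf{M}=\rb{\bW_2\bW_2\transpose}\otimes\identity_{d_0}+\identity_{d_2}\otimes\bW_1\transpose\bW_1$. Then $\mathbf{A}\transpose\mathbf{A}=\identity_{d_2}\otimes\bX\transpose\bX\preceq\norm{\bX}_2^2\identity$ for any $\bX$, the spectrum of $\mathbf{M}$ is exactly the zero-padded pair-sums, and your Step 2 goes through unchanged.
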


\begin{figure}[t]
\centering
    \includegraphics[width=0.325\linewidth]{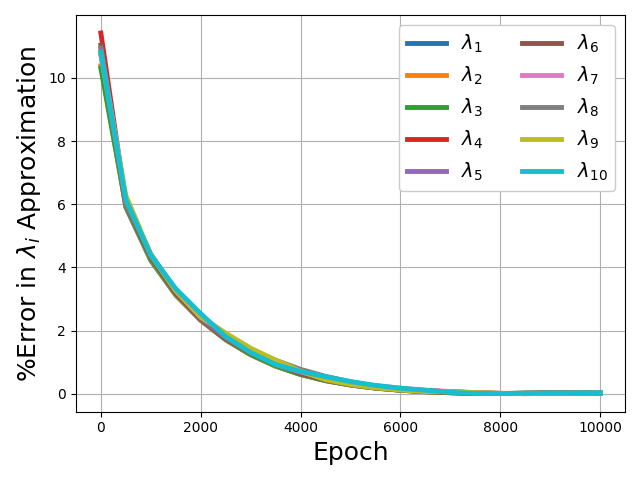}
    \includegraphics[width=0.325\linewidth]{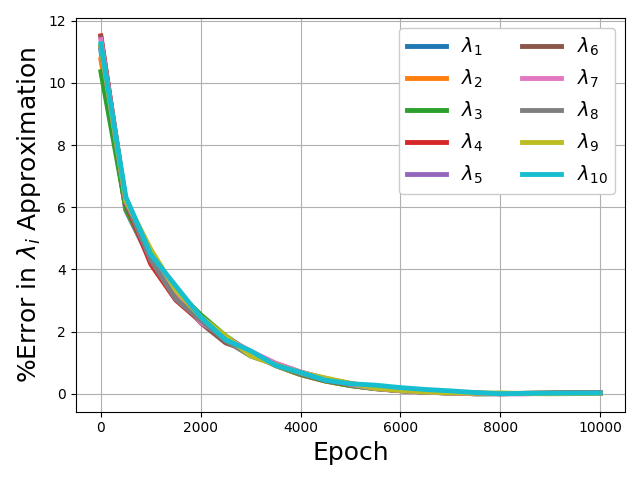}
    \includegraphics[width=0.325\linewidth]{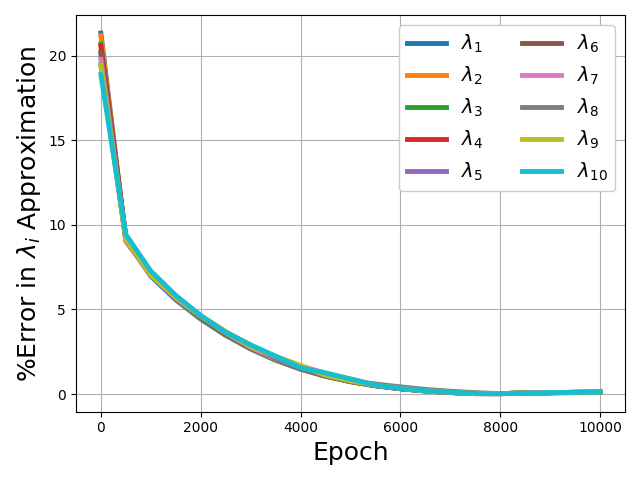}
    \caption{Percentage error in approximating top-10 eigenvalues of the Hessian (using \cref{thm:shallow-networks}) for a 2-layer MLP on (a) MNIST, (b) FashionMNIST, and (c) CIFAR10.}
\label{fig:shallow_networks}
\end{figure}

We visualize the case of isotropic features in \cref{fig:shallow_networks}, where we note that the approximation becomes increasingly precise as training progresses. \cref{thm:shallow-networks} implies that, of the $d_0d_1+d_1d_2$ eigenvalues of the Hessian, at most $\min\cb{d_0,d_1} \times \min\cb{d_1,d_2}$ may be non-zero; we comment on this at the end of \cref{sec:strongly-balanced}.

\begin{figure}[t]
\centering
    \includegraphics[width=0.325\linewidth]{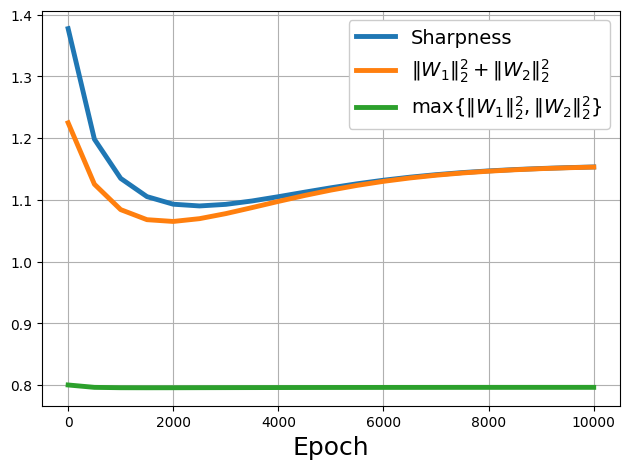}
    \includegraphics[width=0.325\linewidth]{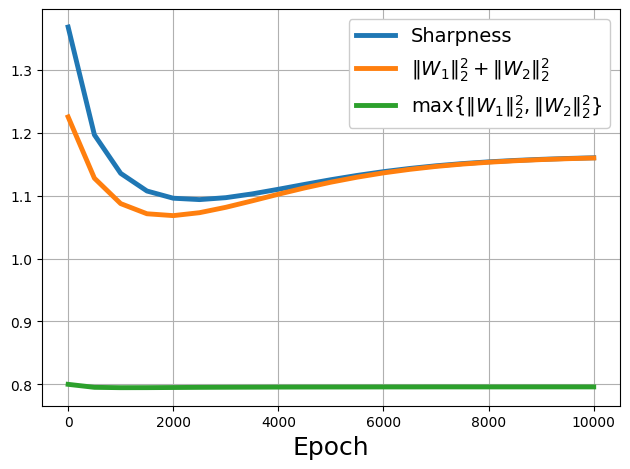}
    \includegraphics[width=0.325\linewidth]{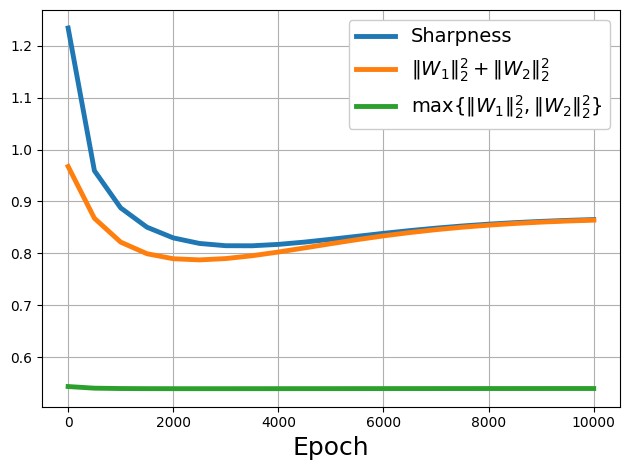}
    \caption{Top eigenvalue of the Hessian, along with the predictions in \cref{thm:shallow-networks} and \citet{singh2026cracking}, for a 2-layer MLP on (a) MNIST, (b) FashionMNIST, and (c) CIFAR10.}
\label{fig:comparing_bounds}
\end{figure}

Consider the case where $\bSigma = \sqrt{n} \identity_{d_0}$.
The spectral norm (largest eigenvalue for symmetric matrices) of the NTK is given by $\lambda_{\max}\rb{\frac{1}{n}\bJ\bJ\transpose} = \norm{\bW_1}_2^2 + \norm{\bW_2}_2^2$. This contradicts the conclusion drawn in \citet[Appendix D.3.3]{singh2026cracking}, suggesting that the maximum eigenvalue is given by $\max\cb{\norm{\bW_1}_2^2, \norm{\bW_2}_2^2}$; we believe that the discrepancy is caused by the block-diagonal structure of the Hessian assumed in their analysis. This distinction is visualized in \cref{fig:comparing_bounds}, where the sum of squared spectral-norms gets increasingly good at approximating the sharpness of the Hessian, as training progresses.

\begin{figure}[t]
\centering
    \includegraphics[width=0.45\linewidth]{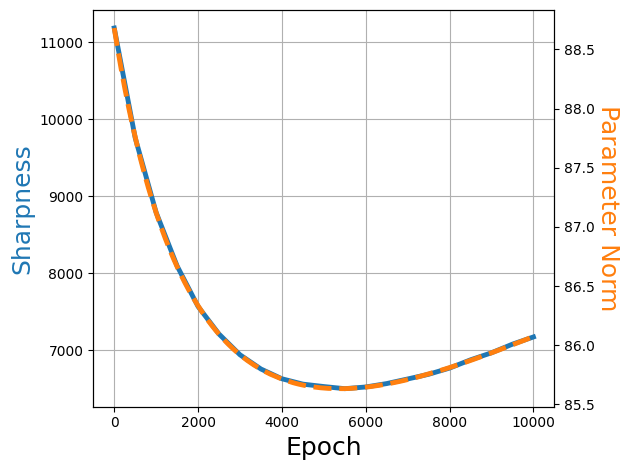}
    \includegraphics[width=0.45\linewidth]{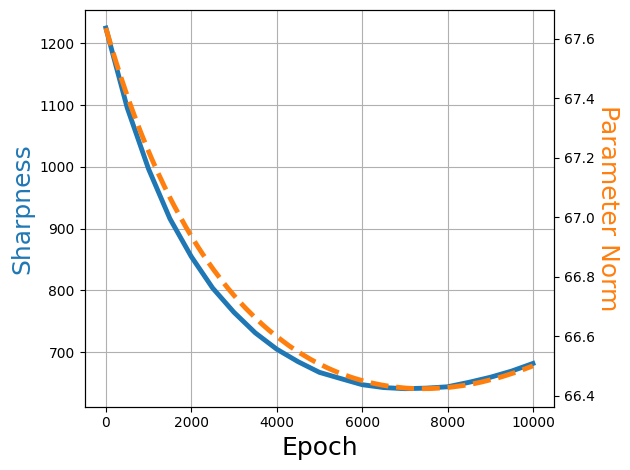}
    \caption{Sum of eigenvalues of the Hessian, computed numerically (Hutchinson’s method \cite{hutchinson89}), and squared parameter-norm of (a) a 2-layer MLP and (b) a 4-layer MLP trained on CIFAR10.}
\label{fig:sharpness_paramnorm}
\end{figure}

As for the sum of eigenvalues, $\trace\rb{\frac{1}{n}\bJ\bJ\transpose} = d_2 \norm{\bW_1}_F^2 + d_0 \norm{\bW_2}_F^2$. 
This relation provides insight into the progressive flattening and subsequent sharpening observed in early- and intermediate-training \cite{kalra2023phasediagram,kalra2025universal}, respectively, before training reaches EoS \cite{cohen2021gradient,cohen2024adaptivegradientmethodsedge,islamov2026noneuclideangradientdescentoperates} -- sharpness decreases since the parameter norm decreases in early-training, as the model output evolves to align with the labels, and both start increasing after a point \citep[Theorem 2]{singh2026unified}. In \cref{fig:sharpness_paramnorm}, we demonstrate this relation for a shallow network as well as for a deep network, showing a high amount of agreement in the dynamics of $\trace\rb{\frac{1}{n}\bJ\bJ\transpose}$ and the squared parameter-norm.

\section{Strongly Balanced Deep Networks}
\label{sec:strongly-balanced}

In this section, we derive analogous results to \cref{thm:shallow-networks} but for networks of arbitrary depth, while relying on an additional assumption on the balance between layers.

\begin{assumption}
    Assume that consecutive layers are strongly balanced, \ie $\bW_{\ell} \bW_{\ell}\transpose = \bW_{\ell+1}\transpose \bW_{\ell+1}$, $\forall \ell \in \sb{L-1}$.
\label{ass:balance}
\end{assumption}
In essence, this implies that the left singular vectors of a layer are aligned with the right singular vectors of the next layer. Moreover, all layers share their non-zero singular values; we denote them by $\sigma_i\rb{\bW}$. While arbitrary points in the parameter space need not correspond to balanced layers, initially balanced layers remain balanced under gradient flow training, provided their is no nonlinearity between them \cite{du2018algorithmic,arora18acceleration}. Even if initially unbalanced, \citet[Theorem 1]{singh2026unified} showed that the layers become increasingly balanced under weight-decay training, while \citet[Proposition 2]{ghosh25matrixfat} showed that gradient descent training at EoS \cite{cohen2021gradient} balances the singular values of the weight matrices.
Therefore, the results hereon can be seen as characterizing the local geometry of the minimizers.

\begin{theorem}
\label{thm:deep-mlp-eigs}
    Under 
    \cref{ass:balance}, the non-zero eigenvalues of the NTK are given by
    \begin{salign}
        \lambda_{ij}\rb{\frac{1}{n}\bJ\bJ\transpose} &\leq
        \frac{\norm{\bX}_2^2}{n} \cdot \begin{cases}
            \frac{\sigma_{i}^{2L}\rb{\bW} - \sigma_{j}^{2L}\rb{\bW}}{\sigma_{i}^{2}\rb{\bW} - \sigma_{j}^{2}\rb{\bW}} &\!\!\!\!, \quad \sigma_{i}\rb{\bW} \neq \sigma_{j}\rb{\bW} \\
            L \cdot \sigma^{2\rb{L-1}}\rb{\bW} &\!\!\!\!, \quad \sigma\rb{\bW} \coloneqq \sigma_{i}\rb{\bW} = \sigma_{j}\rb{\bW}
        \end{cases}
    \label{eqn:deep-mlp-eigs-b}
    \end{salign}
    with equality under \cref{ass:whitened-inputs}. 
\end{theorem}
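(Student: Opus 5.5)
Under \cref{ass:balance}, the plan is to diagonalize every summand of \cref{eqn:column-gramian} in a single common pair of bases; once that is done, the NTK becomes a sum of simultaneously diagonal matrices and its eigenvalues can be read off entrywise. Write the SVDs $\bW_\ell = \bU_\ell \bSigma_\ell \bV_\ell\transpose$. Balance gives $\bW_\ell\bW_\ell\transpose = \bW_{\ell+1}\transpose\bW_{\ell+1}$, so one may choose $\bU_\ell = \bV_{\ell+1}$ (up to a rotation within repeated singular values, which I fix consistently layer by layer). All $\bSigma_\ell$ then share the same nonzero diagonal $\sigma_1\rb{\bW}\ge\sigma_2\rb{\bW}\ge\ldots$, padded with zeros.

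First I would use this alignment to telescope the partial products. The prefix product satisfies $\bP_\ell = \bW_1\transpose\cdots\bW_{\ell-1}\transpose = \bV_1 \bSigma^{\ell-1}\bU_{\ell-1}\transpose$. Its Gram matrix is therefore $\bP_\ell\bP_\ell\transpose = \bV_1\bSigma^{2(\ell-1)}\bV_1\transpose$, where exponents act on the diagonal. In the same way the suffix product gives $\bS_\ell\transpose\bS_\ell = \bU_L\bSigma^{2(L-\ell)}\bU_L\transpose$. The key feature is that the bases $\bV_1$ and $\bU_L$ are common to every $\ell$. The NTK then becomes
\begin{salign}
\frac1n\bJ\bJ\transpose = \rb{\bU_L\otimes \bX\bV_1}\rb{\sum_{\ell=1}^L \bSigma^{2(L-\ell)}\otimes\bSigma^{2(\ell-1)}}\rb{\bU_L\otimes\bX\bV_1}\transpose\cdot\frac1n .
\end{salign}

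Next I would treat the isotropic case, \cref{ass:whitened-inputs}. There $\bX\bV_1 = \bSigma_{11}\bU_{\bX}\bV_{\bX}\transpose\bV_1$ has orthonormal columns scaled by $\bSigma_{11}=\norm{\bX}_2$. The outer factor is thus $\norm{\bX}_2$ times a semi-orthogonal matrix, and the nonzero eigenvalues are exactly $\frac{\norm{\bX}_2^2}{n}\sum_{\ell=1}^L\sigma_j^{2(L-\ell)}\sigma_i^{2(\ell-1)}$. Summing this geometric series gives $(\sigma_i^{2L}-\sigma_j^{2L})/(\sigma_i^2-\sigma_j^2)$, and when $\sigma_i=\sigma_j$ it gives $L\sigma^{2(L-1)}$. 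This establishes the equality case.

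For general $\bX$, write $\bX\bV_1 = \bU_{\bX}\bSigma\bV_{\bX}\transpose\bV_1$ and set $\mathbf{M}=\bU_L\otimes\bX\bV_1$ and $\mathbf{D}$ for the diagonal middle factor. The nonzero eigenvalues of $\mathbf{M}\mathbf{D}\mathbf{M}\transpose$ coincide with those of $\mathbf{D}^{1/2}\mathbf{M}\transpose\mathbf{M}\mathbf{D}^{1/2}$. Moreover $\mathbf{M}\transpose\mathbf{M}=\identity\otimes \bV_1\transpose\bX\transpose\bX\bV_1\preceq \norm{\bX}_2^2\identity$. Loewner monotonicity then yields the eigenvalue-wise bound $\lambda_k\le \frac{\norm{\bX}_2^2}{n}\lambda_k(\mathbf{D})$, which is the stated inequality once the $\lambda_{ij}$ are indexed in sorted order.

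The step I expect to be the main obstacle is the alignment when singular values repeat or when the widths differ. With repeated singular values the SVD is not unique, so I must argue inductively that a valid choice $\bU_\ell=\bV_{\ell+1}$ exists in every eigenspace. With differing widths I must track the rectangular zero-padding of $\bSigma_\ell$ and confirm that the index ranges (rank-limited $i,j$) produce exactly the claimed nonzero eigenvalues.
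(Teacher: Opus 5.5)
Your argument is correct. The equality case is the paper's: under \cref{ass:whitened-inputs} all summands share the Kronecker eigenvectors $\bu_i\otimes\bX\bv_j$, and the geometric series gives the closed form. You genuinely differ from the paper in the inequality for general $\bX$. The paper applies a ``triangle inequality'' to the indexed eigenvalues summand by summand, then bounds each summand via $\lambda_j\rb{\bX A\bX\transpose}\le\norm{\bX}_2^2\lambda_j\rb{A}$. You instead factor the whole NTK as $\frac1n\mathbf{M}\mathbf{D}\mathbf{M}\transpose$ and apply a single Loewner bound to $\mathbf{M}\transpose\mathbf{M}$. Your route is the more rigorous one, for two reasons. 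First, the $k$-th eigenvalue is not subadditive in general; Weyl only gives $\lambda_{i+j-1}\rb{A+B}\le\lambda_i\rb{A}+\lambda_j\rb{B}$. Second, for anisotropic $\bX$ the factors $\bX\rb{\bW_1\transpose\bW_1}^{\ell-1}\bX\transpose$ do not commute across $\ell$, so adding eigenvalues summand-wise is not justified. Your factorization isolates all the anisotropy in the single Gram matrix $\bV_1\transpose\bX\transpose\bX\bV_1$ and yields a clean sorted-eigenvalue statement. The paper's version buys only brevity.

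\textbf{The alignment obstacle.} You do not need to align every intermediate SVD. Balance telescopes directly to $\bS_\ell\transpose\bS_\ell=\rb{\bW_L\bW_L\transpose}^{L-\ell}$ and $\bP_\ell\bP_\ell\transpose=\rb{\bW_1\transpose\bW_1}^{\ell-1}$, which is how the paper begins. It therefore suffices to diagonalize these two PSD matrices, whose nonzero spectra coincide, and repeated singular values play no role. If you prefer aligned SVDs anyway, defining $\bU_{\ell+1}$ from the normalized columns of $\bW_{\ell+1}\bU_\ell$ works for any multiplicities.

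\textbf{Two cautions about padding.} First, take $\bV_1$ and $\bU_L$ square orthogonal. The $\ell=1$ and $\ell=L$ summands contain a zeroth power equal to $\identity$, not the projector $\bV_1\bV_1\transpose$.

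Second, do not restrict $i,j$ to the rank. Under \cref{ass:whitened-inputs}, pairs with exactly one padded index give nonzero eigenvalues $\frac{\norm{\bX}_2^2}{n}\sigma_i^{2\rb{L-1}}$. The formula covers these by setting $\sigma_j\rb{\bW}=0$. For example, take $d_0=d_L=2$, all intermediate widths $1$, and whitened inputs. There are three nonzero eigenvalues, namely $L\sigma^{2\rb{L-1}}$ and $\sigma^{2\rb{L-1}}$ twice, up to the factor $\norm{\bX}_2^2/n$, not $\rb{\min_\ell d_\ell}^2=1$. The rank-limited check you planned would therefore fail, although the theorem itself is unaffected.
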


\begin{figure}[t]
    \centering
        \includegraphics[width=0.325\linewidth]{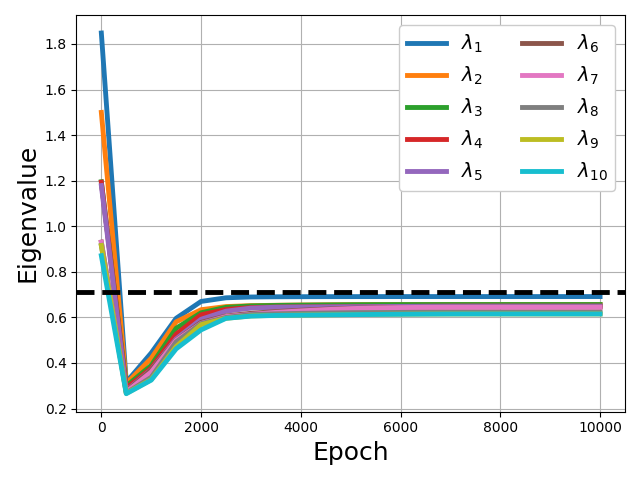}
        \includegraphics[width=0.325\linewidth]{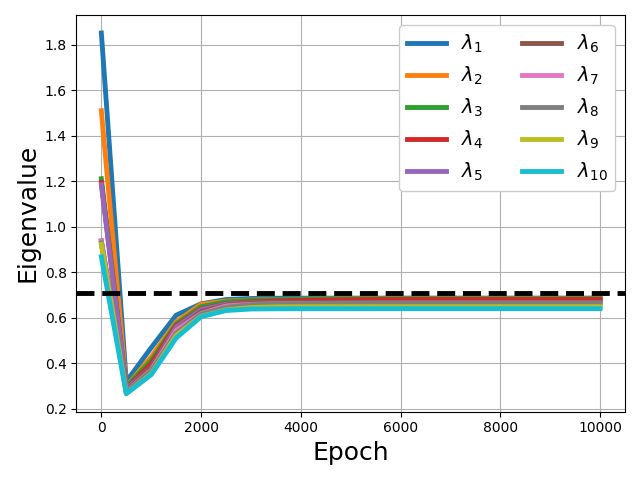}
        \includegraphics[width=0.325\linewidth]{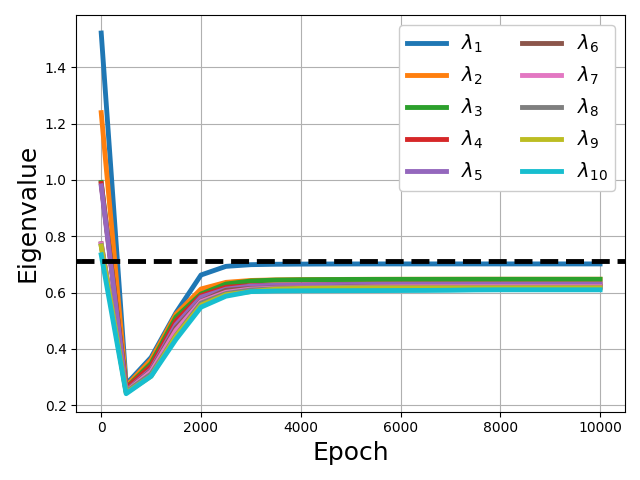}
        \caption{Top-10 eigenvalues of the Hessian and the predicted values (in black, using \cref{thm:deep-mlp-eigs}) for a 4-layer MLP on (a) MNIST, (b) FashionMNIST, and (c) CIFAR10.}
    \label{fig:deep_networks}
    \end{figure}

Hence, of the $p \coloneqq \sum_{\ell=1}^L d_{\ell}d_{\ell-1}$ eigenvalues of the Hessian, at most $\rb{\min\cb{d_0,d_1,\ldots,d_L}}^2$ are non-zero. 
This corresponds to the empirical observations suggesting that the bulk of eigenvalues of the Hessian are near-zero \cite{sagun16singularity,sagun2018empiricalanalysishessianoverparametrized,papyan2019spectrumdeepnethessiansscale}.
We visualize the case of isotropic features in \cref{fig:deep_networks}, where we note that the approximation becomes increasingly precise as training progresses.

\begin{corollary}
\label{cor:bounds}
    Under 
    \cref{ass:balance}, the spectral norm of the NTK can be bounded as
    \begin{salign}
        \frac{1}{n} \norm{\bX}_2^2 \norm{\bW}_2^{2\rb{L-1}} \leq \lambda_{\max}\rb{\frac{1}{n}\bJ\bJ\transpose} \leq
        \frac{L}{n} \norm{\bX}_2^2 \norm{\bW}_2^{2\rb{L-1}}
    \end{salign}
\end{corollary}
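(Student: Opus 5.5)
Both bounds come from the structure of $\frac{1}{n}\bJ\bJ\transpose$ in \cref{eqn:column-gramian} together with \cref{ass:balance}; the upper bound should follow from \cref{thm:deep-mlp-eigs}.

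\textbf{Upper bound.} I would maximise the right-hand side of \cref{eqn:deep-mlp-eigs-b} over the pair $(i,j)$. When $\sigma_i\rb{\bW}\neq\sigma_j\rb{\bW}$, the ratio expands as
\begin{align*}
\frac{a^L-b^L}{a-b}=\sum_{k=0}^{L-1} a^{k} b^{L-1-k}, \qquad a=\sigma_i^2\rb{\bW},\ b=\sigma_j^2\rb{\bW}.
\end{align*}
Each of the $L$ terms is at most $\sigma_1^{2(L-1)}\rb{\bW}=\norm{\bW}_2^{2(L-1)}$, since $a,b\le\sigma_1^2\rb{\bW}$. The equal case $L\sigma^{2(L-1)}\rb{\bW}$ is bounded in the same way. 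Every eigenvalue is therefore at most $\frac{L}{n}\norm{\bX}_2^2\norm{\bW}_2^{2(L-1)}$. Here $\norm{\bW}_2$ denotes the common top singular value that the balanced layers share.

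\textbf{Lower bound.} I would not rely on the $\le$ inequality in the theorem, since it gives no lower bound. Instead I would argue directly from \cref{eqn:column-gramian}. Each summand $\rb{\bS_\ell\transpose\bS_\ell}\otimes\rb{\frac1n\bX\bP_\ell\bP_\ell\transpose\bX\transpose}$ is PSD, so
\begin{align*}
\lambda_{\max}\rb{\tfrac1n\bJ\bJ\transpose}\ge\lambda_{\max}\text{ of any single summand}.
\end{align*}
Take $\ell=L$. Then $\bS_L$ is the empty product, i.e.\ the identity $\identity_{d_L}$, and the summand is $\identity_{d_L}\otimes\frac1n\bX\bP_L\bP_L\transpose\bX\transpose$. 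Its top eigenvalue is $\frac1n\norm{\bX\bP_L}_2^2$.

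The remaining task is to show $\norm{\bX\bP_L}_2\ge\norm{\bX}_2\norm{\bW}_2^{L-1}$. Under balance, $\bP_L=\bW_1\transpose\cdots\bW_{L-1}\transpose$ has singular values $\sigma_i^{L-1}\rb{\bW}$. The row space of $\bP_L\transpose$ is the right singular space of $\bW_1$, which makes this inequality delicate. It needs the top right singular direction of $\bX$ (the one attaining $\norm{\bX}_2$) to align with the top left singular direction of $\bP_L$.

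\textbf{Main obstacle.} This alignment is not implied by \cref{ass:balance} alone; in general only the weaker bound $\norm{\bX\bP_L}_2\ge\sigma_{\min}\rb{\bX}\norm{\bP_L}_2$ holds. I would try two routes. The first is to avoid alignment by averaging: use trace or other summands $\ell$ and pick the layer $\ell=1$, where $\bP_1=\identity$. That summand is $\rb{\bS_1\transpose\bS_1}\otimes\frac1n\bX\bX\transpose$, whose top eigenvalue is exactly $\norm{\bS_1}_2^2\cdot\frac1n\norm{\bX}_2^2=\frac1n\norm{\bX}_2^2\norm{\bW}_2^{2(L-1)}$, because $\bS_1$ is a product of $L-1$ balanced layers with aligned singular vectors. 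This is the choice I would commit to, since it needs no alignment with $\bX$. Combining it with the upper bound proves the corollary.
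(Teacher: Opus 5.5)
Your proof is correct and is essentially the derivation the paper intends; the paper gives no separate proof of this corollary. The upper bound is the $i=j=1$ case of \cref{thm:deep-mlp-eigs}, equivalently the triangle-inequality bound $\lambda_{\max}\le\frac{\norm{\bX}_2^2}{n}\sum_{\ell}\norm{\bS_\ell}_2^2\norm{\bP_\ell}_2^2=\frac{L}{n}\norm{\bX}_2^2\norm{\bW}_2^{2(L-1)}$ of \cref{lem:summary-spectrum} under balance. The lower bound follows, as you say, by discarding every PSD summand except $\ell=1$: there $\bP_1=\identity$ keeps $\bX$ and the weights in separate Kronecker factors, so no alignment is needed. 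In a write-up, drop the $\ell=L$ detour and the ``two routes'' framing, which only ever describes one route, and state the $\ell=1$ argument directly.
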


\begin{figure}[t]
\centering
    \includegraphics[width=0.325\linewidth]{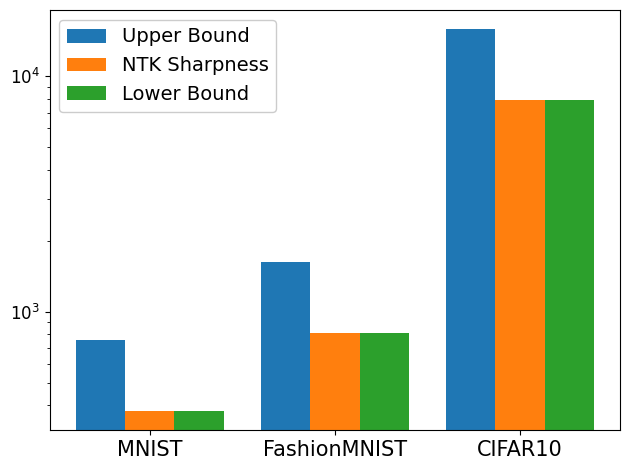}
    \includegraphics[width=0.325\linewidth]{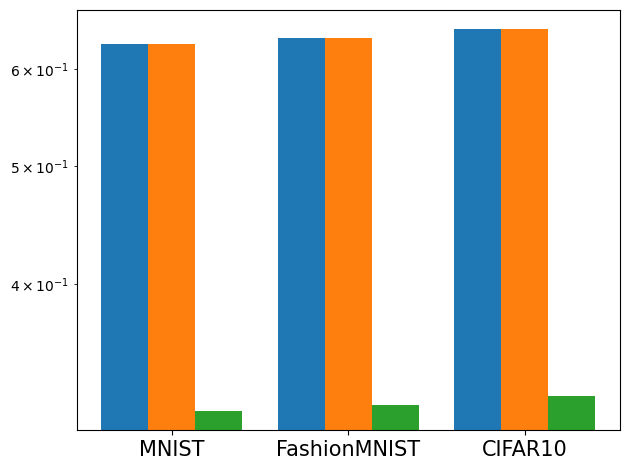}
    \includegraphics[width=0.325\linewidth]{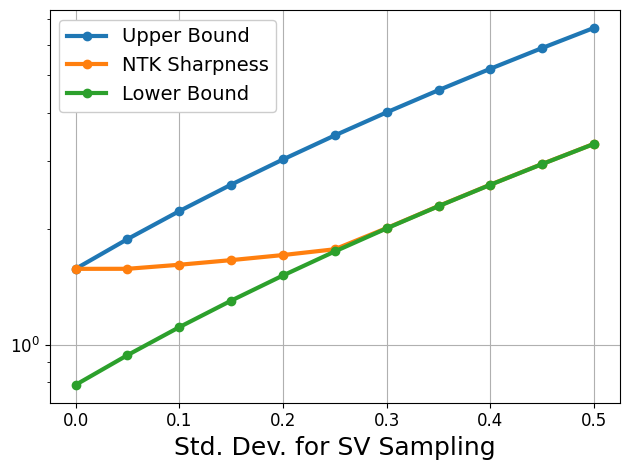}
    \caption{Sharpness of the NTK of the ordinary-least squares (OLS) solution, using (a) original datasets and (b) whitened features, along with corresponding lower and upper bounds (\cref{cor:bounds}). In (c), we redefine the feature matrix by sampling singular values from a Gaussian distribution with different scales.}
\label{fig:bounds}
\end{figure}

We distinguish two cases: 1. when the features are highly anisotropic (typical case for real-world datasets), and 2. when the features are isotropic (more idealistic). We observe that in the former case, the lower bound is tight, while in the latter case the upper bound is tight; see \cref{fig:bounds}. This suggests that the true sharpness floats between the two bounds as the degree of isotropy in the features changes.

\section{Dependence on Data}



\begin{theorem}
\label{thm:eigval-correspondence}
    Say the network implements the ordinary least-squares (OLS) solution, \ie $\bW_{1}\transpose\bW_{2}\transpose \ldots \bW_{L}\transpose = \bX\pinv\bY$, where $\cdot\pinv$ denotes the Moore-Penrose inverse. Under \cref{ass:whitened-inputs,ass:balance},
    \begin{salign}
         \sigma_i\rb{\bW} = \bSigma_{11}^{-1/L} \cdot \sigma_i^{1/L}\rb{\bU_{\bX}\transpose \bY}
    \end{salign}
    In particular, the spectral norm is given by
    \begin{salign}
        \lambda_{\max}\rb{\frac{1}{n}\bJ\bJ\transpose} = L \rb{\frac{\norm{\bX}_2^2}{n}}^{1/L} \rb{\frac{\norm{\bU_{\bX}\transpose\bY}_2^2}{n}}^{1-1/L} \leq L \rb{\frac{\norm{\bX}_2^2}{n}}^{1/L} \rb{\frac{\norm{\bY}_2^2}{n}}^{1-1/L}
    \end{salign}
    This bound is attained when $\loss\rb{\bW_{1:L};\bX,\bY} = 0$.
\end{theorem}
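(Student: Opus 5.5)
Under \cref{ass:balance}, the end-to-end map $\bW_{1}\transpose\bW_{2}\transpose \ldots \bW_{L}\transpose$ has singular values $\sigma_i^L\rb{\bW}$. The plan is to read these off from the OLS solution, substitute into \cref{thm:deep-mlp-eigs}, and then bound the data term.

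\textbf{Step 1 (singular values of the product).} Write the SVDs $\bW_\ell = \bU_\ell \bSigma_\ell \bV_\ell\transpose$. Strong balance gives $\bW_\ell\bW_\ell\transpose = \bW_{\ell+1}\transpose\bW_{\ell+1}$, so the two factors share nonzero singular values, and we may choose the singular vectors so that $\bU_\ell = \bV_{\ell+1}$ on the nonzero part. The inner orthogonal factors then cancel in the telescoping product. Hence
\begin{salign}
\bW_{1}\transpose \ldots \bW_{L}\transpose = \bV_1 \bSigma_{\bW}^{L}\bU_L\transpose ,
\end{salign}
whose nonzero singular values are $\sigma_i^L\rb{\bW}$. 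The point needing care is ambiguity of singular vectors under repeated singular values; I would handle it by working with $\rb{\bW_\ell\transpose\bW_\ell}^{k}$ and using balance to show $\rb{\bW_1\transpose\cdots\bW_L\transpose}\rb{\cdots}\transpose = \rb{\bW_1\transpose\bW_1}^{L}$.

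\textbf{Step 2 (singular values of the OLS solution).} Under \cref{ass:whitened-inputs}, $\bX\pinv = \bV_{\bX}\bSigma_{11}^{-1}\bU_{\bX}\transpose$. Since $\bV_{\bX}$ is orthogonal,
\begin{salign}
\sigma_i\rb{\bX\pinv\bY} = \bSigma_{11}^{-1}\sigma_i\rb{\bU_{\bX}\transpose\bY}.
\end{salign}
Equating this with Step 1 gives $\sigma_i^L\rb{\bW} = \bSigma_{11}^{-1}\sigma_i\rb{\bU_{\bX}\transpose\bY}$, which is the first claim after taking $L$-th roots.

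\textbf{Step 3 (spectral norm).} \Cref{thm:deep-mlp-eigs} holds with equality under \cref{ass:whitened-inputs}. Its right-hand side $\rb{a^{L}-b^{L}}/\rb{a-b}$, with $a=\sigma_i^2$ and $b=\sigma_j^2$, equals $\sum_{k=0}^{L-1}a^k b^{L-1-k}$. This is increasing in both arguments, so it is maximized at $i=j=1$, giving
\begin{salign}
\lambda_{\max} = \frac{\norm{\bX}_2^2}{n} L\,\norm{\bW}_2^{2(L-1)} .
\end{salign}
Substituting $\norm{\bW}_2^{2} = \bSigma_{11}^{-2/L}\norm{\bU_{\bX}\transpose\bY}_2^{2/L}$ and $\norm{\bX}_2 = \bSigma_{11}$, the power of $\bSigma_{11}$ is $2 - 2(L-1)/L = 2/L$. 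Splitting the factor $1/n = n^{-1/L}\,n^{-(1-1/L)}$ between the two terms gives the displayed identity.

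\textbf{Step 4 (the inequality).} Because $\bU_{\bX}$ is semi-orthogonal, $\bU_{\bX}\transpose$ is a contraction, so $\norm{\bU_{\bX}\transpose\bY}_2 \le \norm{\bY}_2$. This is an equality when the columns of $\bY$ lie in $\mathrm{col}(\bX) = \mathrm{col}(\bU_{\bX})$. At zero loss $\bY = \bX\bX\pinv\bY = \bU_{\bX}\bU_{\bX}\transpose\bY$, so $\norm{\bY}_2 = \norm{\bU_{\bX}\transpose\bY}_2$ and the bound is attained.

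The main obstacle is Step 1: making the alignment of singular vectors rigorous, including degenerate singular values and differing widths. The rest is algebraic substitution.
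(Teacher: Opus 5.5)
Your proposal is correct and follows the paper's proof. Balance gives $\sigma_i^L(\bW) = \sigma_i(\bX\pinv\bY)$, whitening turns this into $\bSigma_{11}^{-1}\sigma_i(\bU_{\bX}\transpose\bY)$, and the spectral norm comes from substituting into \cref{thm:deep-mlp-eigs} at $i=j=1$. The differences are minor: for the zero-loss case the paper uses $\bX(\bW_1\transpose\bW_1)^L\bX\transpose = \bY\bY\transpose$ to get $\bSigma_{11}^2\sigma_i^{2L}(\bW) = \sigma_i^2(\bY)$, whereas you use $\bY = \bU_{\bX}\bU_{\bX}\transpose\bY$, and your Gram-matrix identity $(\bW_1\transpose\cdots\bW_L\transpose)(\bW_1\transpose\cdots\bW_L\transpose)\transpose = (\bW_1\transpose\bW_1)^L$ already removes the singular-vector degeneracy you worried about in Step 1.
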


We visualize this correspondence in the left panel in \cref{fig:eigvals_w_balanced_init}, where we note that the theory predicts the non-zero eigenvalues exactly. 

\textbf{Dependence on dataset size.} Assuming the entries of $\bX$ and $\bY$ are constant in the datasets size, \ie $\bX_{ij}, \bY_{ij} = \bTheta_n\rb{1}$, their spectral-norms scale as $\bTheta\rb{\sqrt{n}}$, and hence, the sharpness of balanced solutions is independent of dataset size (see \cref{fig:effect_of_subset_size}), in disagreement with \citet[Figure 18]{cohen2021gradient}, which suggests that sharpness increases with $n$. Their observation may be better understood through the geometry of the loss landscape and the behavior of optimization algorithms, which are beyond the scope of this work.

\textbf{Dependence on depth.} Sharpness increases with depth when $\norm{\bX}_2 \leq \norm{\bY}_2$, or $L \geq 2\ln \frac{\norm{\bX}_2}{\norm{\bY}_2}$ -- we visualize this in the middle panel in \cref{fig:eigvals_w_balanced_init}.


\begin{figure}[t]
\centering
    \includegraphics[width=0.325\linewidth]{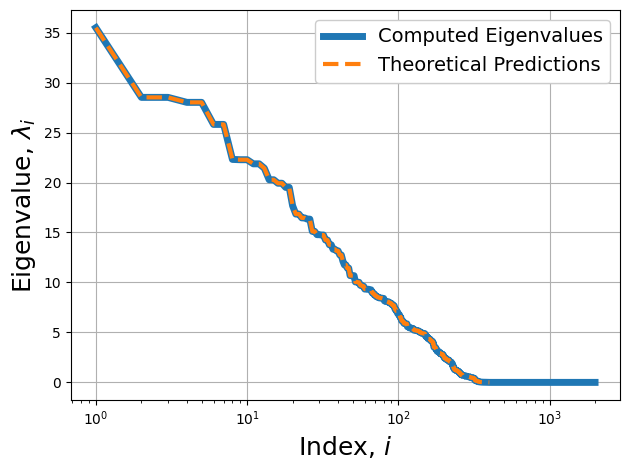}
    \includegraphics[width=0.325\linewidth]{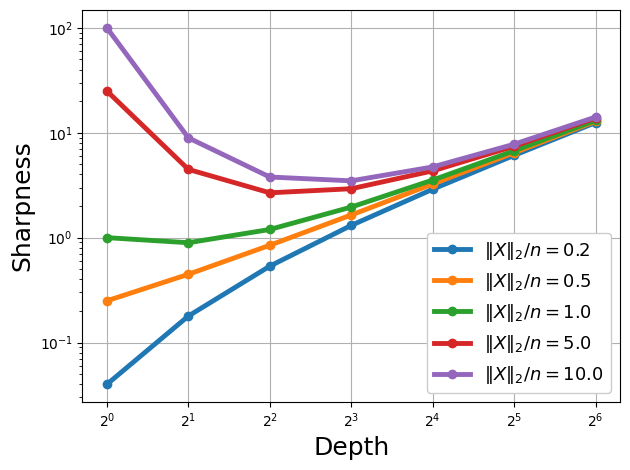}
    \includegraphics[width=0.325\linewidth]{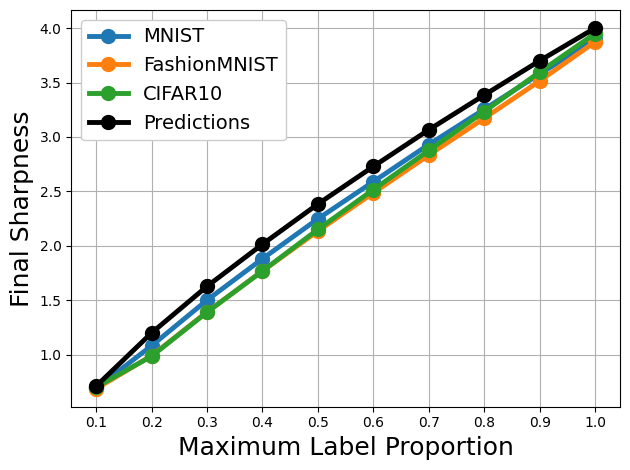}
    \caption{Computing eigenvalues of the Hessian of strongly balanced solutions using the labels, $\bY$. (a) Computed numerically and theoretically (\cref{thm:deep-mlp-eigs,thm:eigval-correspondence}) with random data. (b) On the effect of depth for different values of $\norm{\bX}_2^2/n$, with fixed $\alpha_{\max} = 0.2$. (c) Spectral norm of trained models and theoretical predictions (\cref{thm:eigval-correspondence}).}
\label{fig:eigvals_w_balanced_init}
\end{figure}

\textbf{Dependence on features.} For a centered feature matrix $\bX$, \ie the empirical mean of each feature-dimension is zero, $\{\sigma_{i}^{2}\rb{\bX}/n\}_{i=1}^{d_0}$ represents the amount of variance explained by the principle components. Under \cref{ass:whitened-inputs}, this is given by $\bSigma_{11}^2$. Hence, larger the magnitude of the features, the more spread out they are, and larger is the sharpness of the solution.

\textbf{Dependence on labels.} For one-hot encoded labels, $\{\sigma_{i}^{2}\rb{\bY}/n\}_{i=1}^{d_L}$ is the empirical distribution of labels, \ie proportion of labels from each class in the training set; in what follows, we will use $\alpha_i \coloneqq \sigma_{i}^{2}\rb{\bY}/n$ to denote these proportions.
This implies that the model learns sharper solutions when there is one class with disproportionately large number of labels; we illustrate this in the right panel in \cref{fig:eigvals_w_balanced_init}. Such datasets are, intuitively, simpler to learn, \eg as an extreme case, a dataset with \emph{all inputs belonging to one class} should be easier to learn than a dataset with uniform label distribution. This intuition is in disagreement with \citet[Caveat 2]{cohen2021gradient}, which suggests that the model achieves lower peak-sharpness on simpler datasets.



\begin{figure}[t]
\centering
    \includegraphics[width=0.325\linewidth]{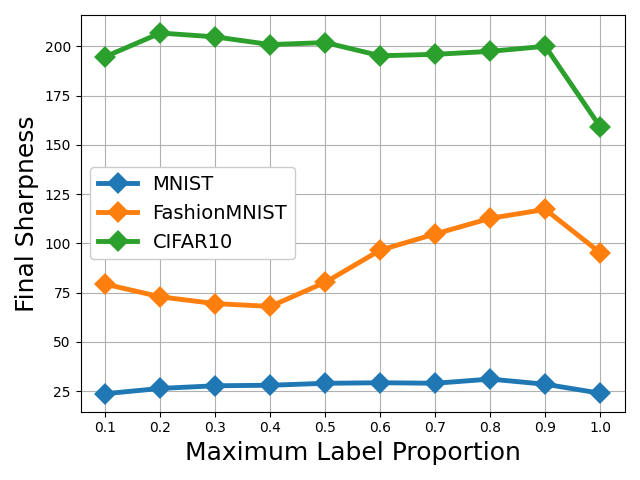}
    \includegraphics[width=0.325\linewidth]{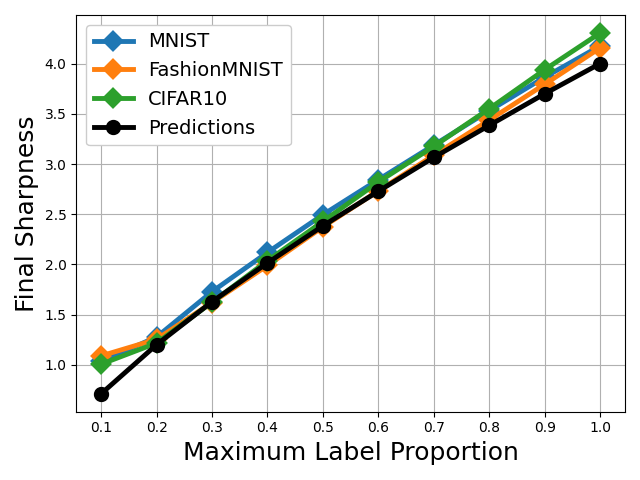}
    \includegraphics[width=0.325\linewidth]{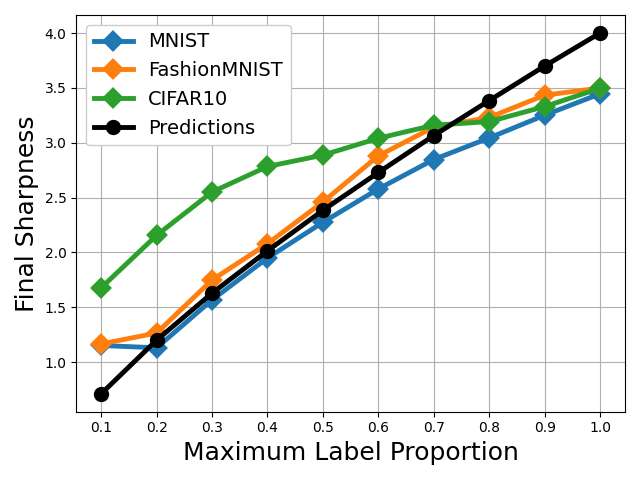}
    \caption{Ablating model assumptions: (a) raw inputs (shedding \cref{ass:whitened-inputs}), (b) Kaiming uniform initialization (shedding \cref{ass:balance}), and (c) unbalanced initialization and Tanh (nonlinear) activation.} 
\label{fig:ablations}
\end{figure}

\textbf{Ablations.} We repeat the experiment in \cref{fig:eigvals_w_balanced_init}, shedding each of 
\cref{ass:whitened-inputs,ass:balance} one-at-a-time. In the left panel in \cref{fig:ablations}, we present the results shedding the assumption on whitened inputs. In this case, our theory breaks down immediately -- the sharpness of the learnt solution turns out to have inconsistent variations with maximum label proportion in the dataset, suggesting that peak-sharpness is strongly influenced by the input-geometry. 
In the middle panel, we present the results shedding the assumption on balanced weights, and use the standard Kaiming uniform initialization \cite{he2015} implemented in the PyTorch framework. Here, we note the theoretical predictions made in previous section stay intact, as the final sharpness increasing with label imbalance. Finally, we inspect if incorporating nonlinearity breaks our theory. In the right panel, we present results for a Tanh-MLP, noting that while the results are slightly different from the theoretical predictions, they are qualitatively similar, allowing us to extend our conclusions on the effects of label distribution to such nonlinear networks.

\clearpage
\bibliography{refs}

\clearpage
\appendix

\section{Omitted Proofs}
\label{app:proofs}

\begin{proof}[\cref{thm:shallow-networks}]
    
    For $L=2$, following \cref{eqn:column-gramian}, 
    \begin{salign}
        &\bS_{1}\transpose \bS_{1} = \bW_{2} \bW_{2}\transpose \implies \lambda_{i}\rb{\bS_{1}\transpose \bS_{1}} = \sigma_i^2\rb{\bW_{2}} \\
        &\bX\bP_{1}\bP_{1}\transpose\bX\transpose = \bX \bX\transpose \implies \lambda_{i}\rb{\bX\bP_{1}\bP_{1}\transpose\bX\transpose} = \sigma_i^2\rb{\bX} \leq \bSigma_{11}^2 \\
        &\bS_{2}\transpose \bS_{2} = \identity_{d_2} \implies \lambda_{i}\rb{\bS_{2}\transpose \bS_{2}} = 1 \\
        &\bX\bP_{2}\bP_{2}\transpose\bX\transpose = \bX \bW_1\transpose \bW_1 \bX\transpose \implies \lambda_{i}\rb{\bX\bP_{2}\bP_{2}\transpose\bX\transpose} \leq \bSigma_{11}^2 \sigma_i^2\rb{\bW_{1}}
    \end{salign}
    where both upper bounds are tight under \cref{ass:whitened-inputs}. 
    Therefore,
    \begin{salign}
        \lambda_{ij}\rb{\frac{1}{n}\bJ\bJ\transpose} &\leq \frac{\norm{\bX}_2^2}{n} \cdot \rb{\sigma_{i}^2\rb{\bW_1} + \sigma_{j}^2\rb{\bW_2}}
    \end{salign}

    
    Furthermore, 
    under \cref{ass:whitened-inputs}, the two summands share their (unnormalized) eigenvectors:
    \begin{salign}
        \bq_{ij}\rb{\frac{1}{n}\bJ\bJ\transpose} &= \bu_{i} \otimes \rb{\bX \bv_{j}}
    \end{salign}
    where $\bu_{i}$ is the $i\textsuperscript{th}$ left singular vector of $\bW_2$, and $\bv_{j}$ is the $j\textsuperscript{th}$ right singular vector of $\bW_1$.
    Therefore, \emph{all} non-zero eigenvalues of the NTK 
    are given by pairwise-sums of squared singular values of the two layers:
    \begin{salign}
        \lambda_{ij}\rb{\frac{1}{n}\bJ\bJ\transpose} &= \frac{\norm{\bX}_2^2}{n}\rb{\sigma_{i}^2\rb{\bW_1} + \sigma_{j}^2\rb{\bW_2}}
    \end{salign}
    where $i\in\sb{\min\cb{d_0,d_1}}$ and $j\in\sb{\min\cb{d_1,d_2}}$. 
    
\end{proof}

\begin{proof}[\cref{thm:deep-mlp-eigs}]

    Under \cref{ass:balance}, we have $\bS_{\ell}\transpose \bS_{\ell} = \rb{\bW_{L} \bW_{L}\transpose}^{L-\ell}$ and $\bP_{\ell} \bP_{\ell}\transpose = \rb{\bW_{1}\transpose \bW_{1}}^{\ell-1}$. Therefore, following \cref{eqn:column-gramian},
    \begin{salign}
        \frac{1}{n} \bJ \bJ\transpose
        &= \frac{1}{n} \sum_{\ell=1}^{L} 
        \rb{\bW_L \bW_L\transpose}^{L-\ell}
        \otimes
        \rb{\bX \rb{\bW_1\transpose \bW_1}^{\ell-1} \bX\transpose}
    \label{eqn:stong-balance-raw}
    \end{salign}
    Using triangle inequality and \cref{ass:whitened-inputs}, the eigenvalues of the NTK are upper bounded as
    \begin{salign}
        \lambda_{ij}\rb{\frac{1}{n} \bJ \bJ\transpose}
        &\leq \frac{1}{n} \sum_{\ell=1}^{L} 
        \sigma_i^{2\rb{L-\ell}}\rb{\bW}
        \lambda_j\rb{\bX \rb{\bW_1\transpose \bW_1}^{\ell-1} \bX\transpose} \\
        &\leq \frac{\norm{\bX}_2^2}{n} \cdot \sum_{\ell=1}^{L} \sigma_{j}^{2\rb{L-\ell}}\rb{\bW} \sigma_{i}^{2\rb{\ell-1}}\rb{\bW} \label{eqn:deep-mlp-eigs-a} \\
        &= \frac{\norm{\bX}_2^2}{n} \cdot
        \begin{cases}
            \frac{\sigma_{i}^{2L}\rb{\bW} - \sigma_{j}^{2L}\rb{\bW}}{\sigma_{i}^{2}\rb{\bW} - \sigma_{j}^{2}\rb{\bW}} &\!\!\!\!, \quad \sigma_{i}\rb{\bW} \neq \sigma_{j}\rb{\bW} \\
            L \cdot \sigma^{2\rb{L-1}}\rb{\bW} &\!\!\!\!, \quad \sigma\rb{\bW} \coloneqq \sigma_{i}\rb{\bW} = \sigma_{j}\rb{\bW}
        \end{cases}
    \end{salign}

    
    If $\bSigma = \bSigma_{11}\identity_{d_0}$, then we further have that the (unnormalized) eigenvectors of each summand are given by the Kronecker products $\bq_{ij} = \bu_{i} \otimes \rb{\bX \bv_{j}}$, where $\bu_{i}$ is the $i\textsuperscript{th}$ left singular vector of $\bW_L$, and $\bv_{j}$ is the $j\textsuperscript{th}$ right singular vector of $\bW_1$. 
    Hence, the eigenvalues add up over the summands,
    and the upper bound derived above becomes tight.

\end{proof}

\begin{proof}[\cref{thm:eigval-correspondence}]
    
    The OLS solution to the problem is given by $\bW_1\transpose\bW_{2}\transpose\ldots\bW_L\transpose = \bX\pinv\bY$. Under \cref{ass:balance}, the shared singular values simplify as $\sigma_i^L\rb{\bW} = \sigma_i\rb{\bX\pinv\bY}$.
    Using \cref{ass:whitened-inputs}, we have
    \begin{salign}
        \sigma_i^L\rb{\bW} &= \bSigma_{11}\inv \cdot \sigma_i\rb{\bU_{\bX}\transpose \bY} \implies \sigma_i\rb{\bW} = \bSigma_{11}^{-1/L} \cdot \sigma_i^{1/L}\rb{\bU_{\bX}\transpose \bY}
    \end{salign}
    In particular, if $\bX\bW_1\transpose\bW_{2}\transpose\ldots\bW_L\transpose = \bY$, then
    \begin{salign}
        \bX \rb{\bW_1\transpose \bW_1}^L \bX\transpose = \bY\bY\transpose &\implies \bSigma_{11}^2 \sigma_i^{2L}\rb{\bW} = \sigma_i^{2}\rb{\bY} \\
        &\implies \sigma_i\rb{\bW} = \bSigma_{11}^{-1/L} \cdot \sigma_i^{1/L}\rb{\bY}
    \end{salign}
    All that is left is to substitute the shared singular values in \cref{thm:deep-mlp-eigs}.

\end{proof}

\section{Other Results}

\subsection{Summary of Eigenvalues}

Several works have used Hessian-based measures like the maximum eigenvalue (spectral norm for symmetric matrices) or the sum of eigenvalues (trace) to measure generalization. Intuitively, at a local minimum, the spectral norm measures the worst-case change in loss caused by a small perturbation in any direction, while the trace measures the expected change in loss caused by a random isotropic perturbation with small variance.

\begin{lemma}
\label{lem:summary-spectrum}
    The spectral norm and trace of the NTK are given by
    \begin{salign}
        &\;\lambda_{\max} \rb{\frac{1}{n}\bJ\bJ\transpose} \leq \frac{\norm{\bX}_2^2}{n} \cdot \sum_{\ell=1}^{L} \norm{\bS_{\ell}}_2^2 \norm{\bP_{\ell}}_2^2 \\
        &\;\trace\rb{\frac{1}{n}\bJ\bJ\transpose} \leq \frac{\norm{\bX}_2^2}{n} \cdot \sum_{\ell=1}^{L}
        \norm{\bS_{\ell}}_F^2 \norm{\bP_{\ell}}_F^2
    \end{salign}
    where the bound for trace is attained under \cref{ass:whitened-inputs}.
\end{lemma}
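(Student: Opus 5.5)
We work directly from the decomposition in \cref{eqn:column-gramian}, writing the NTK as $\frac{1}{n}\bJ\bJ\transpose = \sum_{\ell=1}^L \bS_{\ell}\transpose\bS_{\ell} \otimes \frac{1}{n}\bX\bP_{\ell}\bP_{\ell}\transpose\bX\transpose$, where every summand is positive semidefinite. The spectral-norm bound follows from the triangle inequality for the operator norm together with multiplicativity of the spectral norm under Kronecker products. For the trace bound we use that the trace is linear and multiplicative over Kronecker products, which turns it into an exact sum that we then bound term by term.

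\textbf{Spectral norm.} By subadditivity, $\lambda_{\max}(\frac{1}{n}\bJ\bJ\transpose) \le \sum_\ell \norm{\bS_\ell\transpose\bS_\ell}_2 \cdot \frac{1}{n}\norm{\bX\bP_\ell\bP_\ell\transpose\bX\transpose}_2$, because $\norm{A\otimes B}_2 = \norm{A}_2\norm{B}_2$. Next, $\norm{\bS_\ell\transpose\bS_\ell}_2 = \norm{\bS_\ell}_2^2$ and $\norm{\bX\bP_\ell\bP_\ell\transpose\bX\transpose}_2 = \norm{\bX\bP_\ell}_2^2 \le \norm{\bX}_2^2\norm{\bP_\ell}_2^2$ by submultiplicativity. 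Together these give the first inequality.

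\textbf{Trace.} Using $\trace(A\otimes B) = \trace(A)\trace(B)$, we obtain $\trace(\frac{1}{n}\bJ\bJ\transpose) = \frac{1}{n}\sum_\ell \norm{\bS_\ell}_F^2\,\trace(\bX\bP_\ell\bP_\ell\transpose\bX\transpose)$. We then write $\trace(\bX\bP_\ell\bP_\ell\transpose\bX\transpose) = \trace(\bP_\ell\transpose\bX\transpose\bX\bP_\ell)$. Since $\bX\transpose\bX = \bV_{\bX}\bSigma^2\bV_{\bX}\transpose \preceq \bSigma_{11}^2\identity_{d_0}$ and $\bSigma_{11}^2 = \norm{\bX}_2^2$, congruence by $\bP_\ell$ preserves the Loewner order, so $\bP_\ell\transpose\bX\transpose\bX\bP_\ell \preceq \norm{\bX}_2^2\bP_\ell\transpose\bP_\ell$. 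Taking traces gives $\trace(\bX\bP_\ell\bP_\ell\transpose\bX\transpose) \le \norm{\bX}_2^2\norm{\bP_\ell}_F^2$, and hence the second inequality.

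\textbf{Equality.} Under \cref{ass:whitened-inputs}, $r=d_0$ and $\bV_{\bX}$ is orthogonal, so $\bX\transpose\bX = \bSigma_{11}^2\identity_{d_0}$ exactly. The Loewner inequality above is then an equality for every $\ell$, and because the trace step was exact, the trace bound is attained.

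The step needing the most care is the equality claim. We have to confirm that the only loss in the trace chain occurs at $\bX\transpose\bX \preceq \norm{\bX}_2^2\identity$, and that this relation becomes an identity precisely when $r = d_0$ with isotropic singular values. If instead $r<d_0$, then $\bX\transpose\bX$ is singular and the bound would be strict whenever $\bP_\ell$ has a component in the kernel of $\bX$. We should also not claim attainment for the spectral-norm bound. Even under whitening, the summands need not share top eigenvectors unless \cref{ass:balance} holds as in \cref{thm:deep-mlp-eigs}, so the subadditivity step can be strict.
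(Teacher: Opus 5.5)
Your proof is correct and follows the paper's argument. For the spectral norm you use the triangle inequality together with $\norm{A\otimes B}_2=\norm{A}_2\norm{B}_2$; for the trace you use $\trace(A\otimes B)=\trace(A)\trace(B)$ followed by $\trace(\bX\bP_{\ell}\bP_{\ell}\transpose\bX\transpose)\le\norm{\bX}_2^2\norm{\bP_{\ell}}_F^2$, exactly as the paper does. Your Loewner-order justification of that last step, and of attainment under \cref{ass:whitened-inputs}, fills in details the paper leaves implicit; likewise, your use of submultiplicativity ($\norm{\bX\bP_{\ell}}_2^2\le\norm{\bX}_2^2\norm{\bP_{\ell}}_2^2$) is more accurate than the equality the paper displays at that step.
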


\begin{proof}
    
    Following \cref{eqn:column-gramian},
    \begin{salign}
        \frac{1}{n} \bJ \bJ\transpose
        &= \frac{1}{n} \sum_{\ell=1}^{L}
        \rb{\bS_{\ell}\transpose \bS_{\ell}} \otimes \rb{\bX \bP_{\ell} \bP_{\ell}\transpose \bX\transpose}
    \end{salign}
    Using triangle inequality,
    \begin{salign}
        \lambda_{\max} \rb{\frac{1}{n}\bJ\bJ\transpose} &\leq \frac{1}{n} \sum_{\ell=1}^{L} \lambda_{\max} \rb{\rb{\bS_{\ell}\transpose \bS_{\ell}} \otimes \rb{\bX \bP_{\ell} \bP_{\ell}\transpose \bX\transpose}} \\
        &= \frac{1}{n} \sum_{\ell=1}^{L} \norm{\bS_{\ell}}_2^2 \lambda_{\max} \rb{\bX \bP_{\ell} \bP_{\ell}\transpose \bX\transpose} \\
        &= \frac{\norm{\bX}_2^2}{n} \cdot \sum_{\ell=1}^{L} \norm{\bS_{\ell}}_2^2 \norm{\bP_{\ell}}_2^2
    \end{salign}
    
    As for the sum of eigenvalues,
    \begin{salign}
        \trace\rb{\frac{1}{n}\bJ\bJ\transpose} &= \frac{1}{n} \sum_{\ell=1}^{L} \trace \rb{\rb{\bS_{\ell}\transpose \bS_{\ell}} \otimes \rb{\bX \bP_{\ell} \bP_{\ell}\transpose \bX\transpose}} \\
        &= \frac{1}{n} \sum_{\ell=1}^{L} \norm{\bS_{\ell}}_F^2 \trace \rb{\bX \bP_{\ell} \bP_{\ell}\transpose \bX\transpose} \\
        &\leq \frac{\norm{\bX}_2^2}{n} \cdot \sum_{\ell=1}^{L}
        \norm{\bS_{\ell}}_F^2 \norm{\bP_{\ell}}_F^2
    \end{salign}
\end{proof}

\subsection{Other Figures}

\begin{figure}[h]
\centering
    \includegraphics[width=0.325\linewidth]{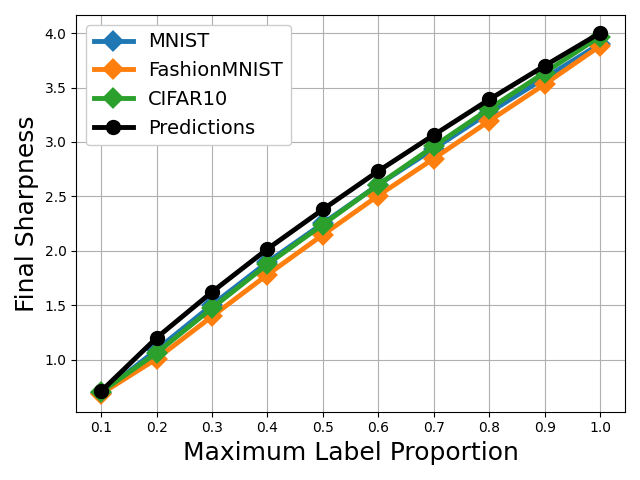}
    \includegraphics[width=0.325\linewidth]{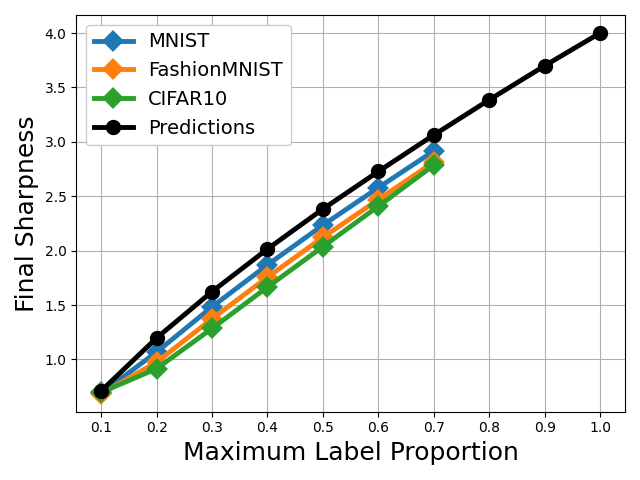}
    \includegraphics[width=0.325\linewidth]{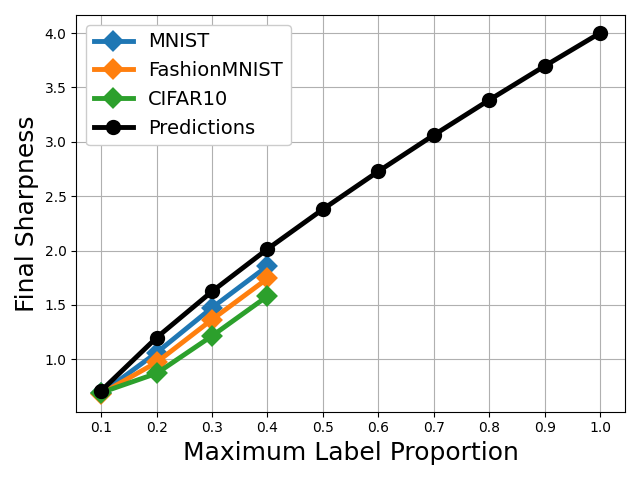}
    \caption{Effect of dataset size on spectral norm of trained networks: (a) $n=4$k, (b) $7$k and (c) $10$k. Theoretical predictions correspond to \cref{thm:deep-mlp-eigs,thm:eigval-correspondence}. Some high $\alpha_{\max}$ are not presented for $n=7$k and $10$k since they requires more samples with label $0$ than available -- $5$k in the datasets considered.}
\label{fig:effect_of_subset_size}
\end{figure}

\section{Experimental Details}

\textbf{Data Whitening.} Given the SVD of a feature matrix $\real^{n\times d_0} \ni \bX' = \bU_{\bX}\bSigma\bV_{\bX}\transpose$, we redefine the features to be proportional to the polar factor, \ie $\bX = \sqrt{n}\sigma_{\bx}\bU_{\bX}\bV_{\bX}\transpose$. 

\textbf{Sampling Balanced Weights.} Assume we are given a sequence of $k=\min\cb{d_0,d_1,\ldots,d_L}$ singular values; we arrange them to create a diagonal matrix, $\bS \in \real^{k\times k}$. To construct a sequence of balanced weight matrices, we start by sampling the right singular vectors of $\bW_1$ by orthogonalizing a random $d_0\times k$ matrix with $\cN\rb{0,1}$ entries; denote it by $\bV_1 \in \real^{d_0\times k}$. Then, for each $\ell \in \sb{L}$, do
\begin{enumerate}[itemsep=-2mm]
    \item Sample the left singular vectors of $\bW_{\ell}$ by orthogonalizing a random $d_{\ell}\times k$ matrix with $\cN\rb{0,1}$ entries; denote it by $\bU_{\ell} \in \real^{d_1\times k}$.
    \item Define the weight matrix as $\bW_{\ell} \coloneqq \bU_{\ell}\bS\bV_{\ell}\transpose \in \real^{d_{\ell}\times d_{\ell-1}}$.
    \item Set the right singular vectors of the next layer as the left singular vectors of the previous layers, $\bV_{\ell+1} \coloneqq \bU_{\ell}$.
\end{enumerate}

\textbf{Subsampling.} With real-world datasets MNIST \cite{lecun2010mnist}, FashionMNIST \cite{xiao2017fashionmnist} and CIFAR10 \cite{Krizhevsky09learningmultiple}, we use subsets of $n=5,000$ samples. If the maximum label proportion $\alpha_{\max}$ is set, we take $\lfloor(1-\alpha_{\max})n\rfloor$ samples from each of the classes $1\!-\!9$, where $\lfloor\cdot\rfloor$ denotes the floor function, and the rest of the samples from class $0$.

\textbf{Training.} All models are trained using full-batch gradient descent for $10,000$ epochs, with an initial learning rate $\eta_{\max}$ which is successively halved anytime $<0.1\%$ improvement in loss is observed over $100$ epochs.

\textbf{\cref{fig:shallow_networks}:} Top-10 eigenvalues of the Hessian, computed numerically using LOBPCG and analytically using \cref{thm:shallow-networks}. Datasets are orthogonalized with $\sigma_{\bx}=1$, and our model is a 2-layer Linear-MLP with width 256, trained with $\eta_{\max}=1e-3$.

\textbf{\cref{fig:comparing_bounds}:} Top eigenvalue of the Hessian, along with the predictions in \cref{thm:shallow-networks} and \citet{singh2026cracking}. Datasets are orthogonalized with $\sigma_{\bx}=1$, and our model is a 2-layer Linear-MLP with width 256, trained with $\eta_{\max}=1e-3$.

\textbf{\cref{fig:sharpness_paramnorm}:} Sum of eigenvalues of the Hessian is computed numerically using Hutchinson’s method \cite{hutchinson89}. Datasets are orthogonalized with $\sigma_{\bx}=1$, and our models are a 2-layer Linear-MLP with width 256 and a 4-layer Linear-MLP with width 64, trained with $\eta_{\max}=5e-4$.

\textbf{\cref{fig:deep_networks}:} Top-10 eigenvalues of the Hessian, computed numerically using LOBPCG and analytically using \cref{thm:deep-mlp-eigs}. Datasets are orthogonalized with $\sigma_{\bx}=1$, and our model is a 4-layer Linear-MLP with width 64, trained with $\eta_{\max}=1e-2$.

\textbf{\cref{fig:bounds}:} The OLS solution is computed as $\bW\transpose \coloneqq \bW_1\transpose\bW_2\transpose\ldots\bW_L\transpose = \bX\pinv \bY$, and a network with balanced layers is constructed to simulate this solution by setting $\bV_1$ and $\bU_L$ as the right and left singular vectors of $\bW$. Our model is a 2-layer Linear-MLP with width 128.

\textbf{\cref{fig:eigvals_w_balanced_init}:}
\begin{itemize}[itemsep=-2mm]
    \item \textbf{Left:} We sample $\bX' \in \real^{20\times 100}$ with standard normal entries, and then orthogonalize it with $\sigma_{\bx} = 3$. We sample $d=20$ singular values from $\cU\sb{0,1}$, and use it sample a sequence of $L=5$ balanced weight matrices, with $\bW_{\ell} \in \real^{d\times d}$. Finally, set $\bY = \bW_5\bW_4\ldots\bW_1\bX$ so that 
    the GGN approximation is exact.
    \item \textbf{Right:} Spectral norm of the Hessian is computed numerically using LOPPCG \cite{Knyazev01lobpcg,stathopoulos02lobpcg} and analytically using \cref{thm:deep-mlp-eigs,thm:eigval-correspondence}. Datasets are orthogonalized with $\sigma_{\bx}=1$, and our model is a 4-layer Linear-MLP with width 64, initialized with balanced weights, and trained with $\eta_{\max}=1e-2$.
\end{itemize}

\textbf{\cref{fig:ablations}:} Spectral norm of the Hessian is computed numerically using LOBPCG and analytically using \cref{thm:deep-mlp-eigs,thm:eigval-correspondence}.
\begin{itemize}[itemsep=-2mm]
    \item \textbf{Left:} Datasets are \emph{not} orthogonalized, and our model is a 4-layer Linear-MLP with width 64, initialized with balanced weights.
    \item \textbf{Middle:} Datasets are orthogonalized with $\sigma_{\bx}=1$, and our model is a 4-layer Linear-MLP with width 64, initialized with Kaiming uniform initialization \cite{he2015} (standard in PyTorch).
    \item \textbf{Right:} Datasets are orthogonalized with $\sigma_{\bx}=1$, and our model is a 4-layer Tanh-MLP with width 64, initialized with balanced weights.
\end{itemize}
All models are trained with $\eta_{\max}=1e-2$.


\textbf{\cref{fig:effect_of_subset_size}:} Spectral norm of the Hessian, computed numerically using LOPPCG and analytically using \cref{thm:deep-mlp-eigs,thm:eigval-correspondence}. Datasets are orthogonalized with $\sigma_{\bx}=1$, and our model is a 4-layer Linear-MLP with width 64, trained with $\eta_{\max}=1e-2$.

\end{document}